\documentclass[11pt]{article}

\usepackage[letterpaper,margin=1in]{geometry}
\usepackage[T1]{fontenc}
\usepackage[utf8]{inputenc}
\usepackage{newtxtext}

\usepackage{amsmath,amssymb,amsthm,mathtools}
\usepackage{newtxmath}
\usepackage{microtype}

\usepackage{booktabs}
\usepackage{tabularx}
\usepackage{array}
\usepackage{caption}
\usepackage{subcaption}
\usepackage{float}

\usepackage{enumitem}
\setlist{noitemsep,topsep=2pt,leftmargin=1.4em}

\usepackage[ruled,vlined,linesnumbered]{algorithm2e}

\usepackage{listings}
\usepackage{xcolor}
\definecolor{kw}{HTML}{0B5394}
\definecolor{cmt}{HTML}{6A737D}
\definecolor{str}{HTML}{0A7D33}
\definecolor{bg}{HTML}{F6F8FA}
\definecolor{accent}{HTML}{1F4E79}
\definecolor{soft}{HTML}{E8EEF6}
\definecolor{warm}{HTML}{F4ECDD}
\lstdefinestyle{cs}{
  basicstyle=\ttfamily\footnotesize,
  backgroundcolor=\color{bg},
  keywordstyle=\color{kw}\bfseries,
  commentstyle=\color{cmt}\itshape,
  stringstyle=\color{str},
  numberstyle=\tiny\color{cmt},
  breaklines=true,
  showstringspaces=false,
  columns=fullflexible,
  frame=single,
  framerule=0pt,
  framesep=6pt,
  aboveskip=8pt,belowskip=6pt,
  literate={`}{\textasciigrave}1
}
\usepackage{tikz}
\usetikzlibrary{positioning,arrows.meta,calc,fit,backgrounds,shapes.geometric,shapes.misc}

\usepackage[numbers,sort&compress]{natbib}
\usepackage[colorlinks=true,linkcolor=accent,citecolor=accent,urlcolor=accent]{hyperref}
\usepackage[nameinlink,capitalise]{cleveref}
\usepackage{authblk}

\theoremstyle{definition}
\newtheorem{definition}{Definition}
\theoremstyle{plain}
\newtheorem{proposition}{Proposition}

\newcommand{\code}[1]{\texttt{\small #1}}
\newcommand{\msc}[1]{\textsc{#1}}
\newcommand{\evsql}{\textsc{evidence sql}}
\DeclareMathOperator*{\argmin}{arg\,min}
\DeclareMathOperator*{\argmax}{arg\,max}
\newcommand{\given}{\,|\,}
\providecommand{\rupee}{\textsf{Rs}\,}

\tikzset{
  stage/.style={rounded corners=2pt, draw=accent, fill=soft, thick,
    align=center, font=\footnotesize\sffamily, minimum height=8mm, inner sep=3pt},
  proc/.style={rounded corners=2pt, draw=accent!80, fill=white, thick,
    align=center, font=\footnotesize\sffamily, minimum height=8mm, inner sep=3pt},
  store/.style={cylinder, shape border rotate=90, aspect=0.18, draw=cmt,
    fill=gray!10, align=center, font=\scriptsize\sffamily, minimum height=10mm,
    minimum width=20mm, inner sep=2pt},
  dec/.style={diamond, aspect=2, draw=accent!80, fill=warm, align=center,
    font=\scriptsize\sffamily, inner sep=1pt},
  skillb/.style={rounded corners=2pt, draw=accent, fill=warm, very thick,
    align=center, font=\footnotesize\sffamily, inner sep=4pt},
  flow/.style={-{Stealth[length=2mm]}, thick, draw=accent!85},
  dflow/.style={-{Stealth[length=2mm]}, thick, dashed, draw=cmt},
  lbl/.style={font=\scriptsize\sffamily\itshape, fill=white, inner sep=1pt}
}

\title{\bfseries From Question-First to Analyst-First:\\[2pt]
\large Domain-Expert Skills and Verified Knowledge Compilation\\
for Proactive Enterprise Analytics}

\author[ ]{Harmohit Singh}
\author[ ]{Rahul Sharma}
\affil[ ]{\normalsize CoreOps AI \quad\texttt{\small \{harmohit.singh,\;rahul.sharma\}@coreops.ai}}
\date{June 15, 2026}

\begin{document}
\maketitle

\begin{abstract}
\noindent
Conversational analytics systems overwhelmingly assume the user already has a
well-formed question, leaving a non-expert facing a blank query box on an
unfamiliar enterprise schema. Commercial ``proactive'' tools narrow this gap
only by detecting statistical anomalies over analyst-curated metric layers, and
academic next-question recommenders depend on query logs that a fresh dataset
lacks. We describe a production enterprise analytics system that inverts the
interaction model from \emph{question-first} to \emph{analyst-first} through two
coupled architectural ideas. First, a pluggable domain-expert \emph{skill}
abstraction: a folder-based, database-free subject-matter pack (a manifest,
per-stage prompt facets, keyword-routed reference libraries, standing-report
templates, and optional deterministic compute) that is auto-selected per
(client, dataset) by a deterministic schema-matching function and spliced as a
\emph{cross-cutting concern} into every stage of a streaming agentic pipeline,
the schema explorer, and the report engines, degrading to a strict no-op when
absent. Because a skill is a self-contained folder resolved deterministically and
added with no per-client engineering, the catalogue is open-ended: an extensible
\emph{marketplace} of domain experts rather than a fixed feature set. Second, an
offline \emph{knowledge-compilation} loop: a Claude-Code-style
agent probes the dataset's parquet via DuckDB (zero load on production), runs
critic-gated per-table convergence with self-healing retries, and data-validates
joins by value overlap, producing durable schema knowledge that, together with
the resolved skill, drives standing expert reports whose every published metric
is re-verified by independently re-executing its evidence SQL, and suggested
questions that mirror the report agenda. These close a \emph{proactive loop}:
reports surface numbers, the numbers seed questions, and a click launches a verified deep
dive, all before the query box is ever used. We give a formal model of the
injection operator and the compilation function, describe the fully-automatic
onboarding chain, and report illustrative evidence from a single live tenant:
deep exploration over $10/10$ tables with $22$ data-validated joins, seven
chartered-accountant standing reports, and roughly thirty domain questions
generated with no prior user query. We make no user-study or benchmark claims;
the contribution is the architecture and its defensibility.
\end{abstract}

\section{Introduction}
\label{sec:intro}

The dominant interface to enterprise data is now a text box. A decade of work on
natural-language interfaces to databases and, more recently, large-language-model
(LLM) text-to-SQL has driven translation accuracy upward on academic benchmarks
\citep{yu2018spider,li2023bird,hong2024survey} and into commercial products
\citep{aws2021quicksightq,google2025lookerca}. Yet every one of these systems
shares a hidden premise: \emph{the user already knows what to ask}. A perfect
translator from English to SQL does nothing for an analyst who is staring at a
freshly connected schema of eighty opaque tables and does not know which question
is worth a query. This is the \emph{cold-start} or \emph{blank-query-box} problem,
and it is a documented barrier to self-service business-intelligence adoption
\citep{lennerholt2021ssbi}. The hard part of analysis is not phrasing a question;
it is knowing which question a domain expert would ask of \emph{this} data.

Two families of systems try to supply the question rather than merely answer it,
and each cold-starts in its own way. \emph{Log-based next-question recommenders}
mine the queries of prior users to suggest the next step
\citep{chatzopoulou2009querie,wang2022nextstep}, but a new tenant has no logs,
so the recommender itself has nothing to recommend from on day one.
\emph{Automated insight discovery} sweeps a dataset for statistically salient
patterns \citep{ding2019quickinsights,demiralp2017foresight,vartak2015seedb}, but
``statistically interesting'' is not the same as ``what a chief financial officer
or a chartered accountant would act on,'' and these tools are domain-agnostic by
construction. The most proactive commercial offerings divide into two camps that
share the same blind spot. Tableau Pulse is bound to an \emph{analyst-curated
metric layer}: a human first defines the metrics and the system narrates their
movements \citep{tableau2025aboutpulse,tableau2025pulseinsighttypes}. ThoughtSpot
SpotIQ and Power BI Quick Insights are domain-agnostic statistical insight
miners: anomaly, correlation, and trend detection over a dataset
\citep{thoughtspot2025spotiq,microsoft2025quickinsights}. Neither camp carries a
\emph{portable domain expert} that, given a matching skill for the domain, can
approach an un-curated schema and author the agenda itself.

\paragraph{Thesis.} This paper describes a system that flips the interaction model
from question-first to analyst-first by treating domain expertise as a
\emph{first-class, pluggable, auto-selected software artifact} and by
\emph{compiling} durable, verified knowledge about a dataset \emph{before} the
user asks anything. The two ideas are coupled. A domain-expert \emph{skill} is a
self-contained folder (prompt fragments, reference libraries, standing-report
templates, deterministic compute) that the system splices into every node of its
agentic pipeline and offline engines. An offline \emph{knowledge-compilation} loop,
modeled on the way a coding agent explores a codebase, probes the data until an
independent critic is satisfied, validates joins against the data, and persists a
reusable knowledge base. The skill and the knowledge together drive (i) standing
expert reports whose every published number is re-verified by re-executing its
evidence SQL, and (ii) suggested questions drawn from the skill's review-cadence
playbook and grounded in the validated knowledge. The result is a closed
\emph{proactive loop}
(\cref{fig:proactive-loop}): the system surfaces \emph{numbers} the user did not
ask for, the numbers \emph{seed questions}, and a click launches a verified
\emph{deep dive}, a sequence that runs to completion before the blank box is ever
touched.

\paragraph{Why the fusion, not the parts, is the contribution.} Each ingredient in
isolation is incremental. Skill packaging echoes the modular ``agent skills'' and
subagent patterns now emerging as general developer tooling
\citep{anthropic2025agentskills,anthropic2025subagents}; agentic exploration and
critic-gated refinement are established
\citep{yao2023react,shinn2023reflexion,gou2023critic}; execution-grounded
verification has precedent \citep{gou2023critic,chen2023selfdebug}. The
contribution is the \emph{fusion}, deployed: a per-tenant, schema-auto-selected
domain-expert layer threaded as a cross-cutting concern through an entire
analytics stack, fused with a data-validated knowledge base and a report engine in
which a number that fails to reproduce against its own evidence SQL is suppressed
rather than published, instantiated as a closed proactive loop with a
fully-automatic onboarding chain. We return to this argument in
\cref{sec:discussion}.

\paragraph{Scope and honesty.} This is a single-deployment \emph{system} paper. We
report no controlled user study, no A/B test, and no public-benchmark leaderboard;
our deployment figures (\cref{sec:casestudy}) are illustrative observations from
one live tenant, reported for transparency rather than as measured wins. Where the
architecture makes a guarantee, we state its boundary precisely: the verification
layer certifies that a published number \emph{matches its declared evidence SQL}
within tolerance, not that the metric's business definition is semantically
correct.

\paragraph{Contributions.}
\begin{enumerate}
\item A pluggable, database-free \textbf{domain-expert skill abstraction}
(\cref{sec:skill}) that injects opinionated expertise as a cross-cutting concern
at every node of an agentic analytics pipeline, the schema explorer, and both
report engines, while degrading to an exact no-op when no skill is active.
\item A deterministic, source-agnostic \textbf{skill-selection algorithm} that
resolves one active skill per (client, dataset) from schema structure, including
distinctive \emph{cell values} that let expert framing fire on opaque schemas
(SAP T-tables, Tally exports) where names alone fail, with no query logs and no
per-client engineering.
\item An offline \textbf{knowledge-compilation loop} (\cref{sec:explore}): a
JSON-action probe agent over read-only DuckDB-on-parquet, gated by a fast
deterministic coverage critic, wrapped in a self-healing convergence loop whose
work list provably shrinks, that data-validates joins by value overlap and
persists durable, resumable schema knowledge.
\item An \textbf{execution-grounded report-synthesis engine}
(\cref{sec:reports}) in which every published metric carries an evidence query
that an independent, non-LLM critic re-executes and tolerance-matches, so a
number that does not reproduce against its evidence query is suppressed rather than
shown, and a durable recipe drives zero-LLM, zero-kernel refresh and drill-down.
\item The \textbf{proactive-analyst loop and its fully-automatic onboarding chain}
(\cref{sec:proactive}), in which the \emph{same} compiled knowledge and skill
power reports, questions, and live chat.
\item A \textbf{formal model} (\cref{sec:formal}) of the injection operator (with
a no-op/backward-compatibility property) and the compilation function (with a
termination argument), plus an honest single-deployment account
(\cref{sec:casestudy,sec:limits}).
\end{enumerate}

\section{Related Work}
\label{sec:related}

\paragraph{Text-to-SQL and the translation framing.}
Semantic parsing of natural language to SQL has matured from cross-domain
benchmarks \citep{yu2018spider} to knowledge-grounded, value-sensitive benchmarks
\citep{li2023bird} and broad LLM-era surveys
\citep{hong2024survey,shi2024t2sqlsurvey}. Crucially, \emph{Spider 2.0} reframes
the task around \emph{real enterprise} workflows (dozens of tables, dialects, and
external knowledge) and shows that translation alone is far from sufficient
\citep{lei2024spider2}, motivating both schema scouting and an external knowledge
layer of the kind we compile. All of this work, however, takes the question as
given.

\paragraph{Disambiguating a stated question vs.\ the prior question of what to ask.}
A complementary line helps users refine a question they have already posed: NaLIR
interactively resolves parse ambiguity \citep{li2014nalir}, while DataTone and
Eviza manage ambiguity in natural-language visualization interfaces
\citep{gao2015datatone,setlur2016eviza}. These address \emph{how to phrase}; we
address \emph{what to ask in the first place}.

\paragraph{Cold-start and next-question recommendation.}
QueRIE recommends queries from collaborative analysis of prior sessions
\citep{chatzopoulou2009querie}, and \citet{wang2022nextstep} recommend the next
natural-language step in an interactive session. Both are log- or
session-dependent and therefore cold-start on a fresh dataset, precisely the gap
a portable domain-expert prior closes. Recent work generates questions directly
from tabular data for conversational exploration \citep{chaudhuri2024autoqg}, the
closest prior art to our question generator. It derives questions from table
statistics and content; ours differ in two ways that the rest of this paper makes
concrete. First, the agenda is seeded by the resolved skill's KPI catalogue and
review-cadence playbook (\cref{sec:proactive}), so the questions reflect what a
role would ask rather than what is statistically describable. Second, they are
generated against the compiled, data-validated schema (\cref{sec:explore}), so a
suggested question is answerable by the same joins the live pipeline will use, an
answerability constraint statistics-only generation does not impose.

\paragraph{Automated insight discovery and mixed-initiative analytics.}
Mixed-initiative interaction is a long-standing ideal \citep{horvitz1999mixedinit}.
Visualization recommenders \citep{vartak2015seedb,wongsuphasawat2016voyager,
wongsuphasawat2017voyager2,siddiqui2017zenvisage} and insight miners
\citep{ding2019quickinsights,demiralp2017foresight,ma2021metainsight,cui2019datasite}
proactively surface salient patterns, and recent LLM agents push toward proactive
assistance \citep{zhao2025proactiveva,lu2024proactiveagent} amid a renewed survey
interest in the automation--initiative balance \citep{monadjemi2025scopingreview}.
These systems optimize statistical interestingness; our agenda is set by a domain
expert and is verified against the data.

\paragraph{LLM data-science and multi-agent systems.}
Reasoning-and-acting and reflective agents \citep{yao2023react,shinn2023reflexion},
multi-agent frameworks \citep{wu2023autogen,hong2024metagpt}, and data-science
agents \citep{hong2024datainterpreter} together with their evaluation benchmarks
\citep{hu2024infiagent} establish the agentic substrate we build on.
Visualization- and dashboard-generation agents
\citep{dibia2023lida,ma2023insightpilot,zhang2025datatodashboard} target the
artifact; we target the \emph{closed loop} that decides which artifacts to make and
re-verifies their numbers against evidence SQL.

\paragraph{Domain-knowledge injection.}
Retrieval-augmented generation and tool use inject external knowledge into LLM
inference \citep{lewis2020rag,gao2024ragsurvey,schick2023toolformer}; in the SQL
setting, Knowledge-to-SQL trains a single ``data-expert'' model to supply
knowledge \emph{upstream} to improve SQL generation \citep{hong2024knowledge2sql}, in
contrast to our \emph{downstream} evidence-SQL publish gate. Semantic/metrics layers
\citep{dbt2024semanticlayer,looker2025lookml} encode metric definitions but require
human curation per organization. Modular ``agent skills'' and subagents package
capability and orchestration as general developer tooling
\citep{anthropic2025agentskills,anthropic2025subagents}.
We specialize skill packaging into a per-tenant, schema-auto-selected
\emph{analytics} skill that threads the same expertise across an entire pipeline
and both report engines, and we derive the agenda from raw schema rather than a
pre-curated metric layer.

\paragraph{Verifier-/critic-gated loops and agent memory.}
Verifier ranking \citep{cobbe2021verifiers}, process supervision
\citep{lightman2023verify}, self-consistency \citep{wang2022selfconsistency}, and
self-refinement \citep{madaan2023selfrefine} improve reliability, while
\citet{huang2023cannotselfcorrect} caution that \emph{intrinsic} self-correction is
unreliable, motivating \emph{execution-grounded} critics
\citep{gou2023critic,chen2023selfdebug} and LLM-as-judge evaluation
\citep{zheng2023llmjudge}; see \citet{pan2024correctionsurvey} for a survey of
automated-correction strategies. Persistent agent memory and
skill libraries \citep{park2023generativeagents,packer2023memgpt,wang2023voyager}
inform our durable knowledge base. We also draw on the relational data-profiling
literature \citep{abedjan2015profiling} for what an automated explorer should
measure. Our distinctive move is to fuse an \emph{opinionated} expert (known to
risk domain-correlated persona bias \citep{tseng2024persona}) with a
\emph{deterministic} execution check, so the opinion frames the analysis but
cannot fabricate the number.

\paragraph{Commercial proactive analytics.}
Question-first natural-language products
\citep{aws2021quicksightq,aws2025quicksightqembedded,google2025lookerca},
metric-bound proactive products
\citep{tableau2025aboutpulse,tableau2025pulseinsighttypes,dbta2024tableaueinstein},
and domain-agnostic statistical insight miners
\citep{thoughtspot2025spotiq,microsoft2025quickinsights} define the landscape. The
white space they leave (proactivity over an un-curated schema, for a domain a
skill covers, driven by a portable domain expert and backed by execution
verification) is the space this system occupies (\cref{tab:positioning}).

\section{Problem Framing and Formalization}
\label{sec:formal}

We model an enterprise analytics platform that is \emph{multi-tenant} (everything
is scoped by a client identifier with no defaults) and serves two regimes: a
\emph{reactive} live pipeline that answers a query, and a \emph{proactive} offline
phase that produces artifacts before any query. The formalization makes precise
(a) what a skill is and how it is injected, (b) what knowledge compilation
computes and why it terminates, and (c) the admissibility contract that lets a
report publish a number.

\subsection{Sets and objects}
Let $C$ be the set of clients and $D$ the set of datasets. For each pair $(c,d)$
the dataset has a schema $\Sigma(c,d)=(T,\mathit{Col},\mathit{Val})$, where $T$ is
the bag of table names, $\mathit{Col}$ the bag of column names, and
$\mathit{Val}$ a sampled bag of cell values from the top rows. The live pipeline
is an ordered tuple of stages
$P=(s_1,\dots,s_6)=(\textsf{guard},\textsf{resolve},\textsf{scout},
\textsf{collate},\textsf{execute},\textsf{narrate})$, each a function on a run
context $x$, i.e.\ $s_i:X\to X$. $P$ is the \emph{maximal} stage sequence: the route
resolved in \textsf{resolve} ($\textsf{cache\_hit}$, $\textsf{simple}$, or
$\textsf{complex}$) may bypass \textsf{scout} and \textsf{collate}, but because
injection is per-stage the results below hold on whichever sub-sequence executes.

\begin{definition}[Skill]\label{def:skill}
A \emph{skill} is a tuple
$k=(\mathit{slug},\mu,F,R,\kappa,\rho,Y,L,\mathit{cfg})$ where
$\mu$ is the auto-detect spec
$(\textsf{any\_table},\textsf{any\_column},\textsf{any\_value},
\textsf{keywords},\textsf{min\_score})$;
$F:N\to\Sigma^*$ maps a node name in
$N=\{\textsf{router},\textsf{scout},\textsf{coder},\textsf{chart},
\textsf{narrator},\textsf{report\_planner},\textsf{report\_analyst}\}$ to its
\emph{facet} text ($\varepsilon$ if absent);
$R:\textit{stem}\to\Sigma^*$ are reference documents with a keyword map
$\kappa:\textit{stem}\to 2^{W}$ over a vocabulary $W$;
$\rho$ holds routing fields (KPI/playbook/schema-mapping reference stems, a
persona example, core references, and a \textsf{force\_ds} flag);
$Y$ are standing-report templates; $L$ are deterministic library modules; and
$\mathit{cfg}$ holds text budgets. The \emph{registry}
$K=\{k:\text{\code{SKILL.md} present}\}$ is a pure function of disk.
\end{definition}

\subsection{Selection: deterministic and fail-open}
Write $a\sqsubseteq B$ for ``$a$ occurs as a lowercased substring in blob $B$.''
Define, for a skill $k$ and schema $\Sigma$, the hit counts
\[
t=|\{a\in\textsf{any\_table}:a\sqsubseteq T\}|,\quad
c=|\{a\in\textsf{any\_column}:a\sqsubseteq \mathit{Col}\}|,
\]
\[
v=|\{a\in\textsf{any\_value}:a\sqsubseteq \mathit{Val}\}|,\quad
w=|\{a\in\textsf{keywords}:a\sqsubseteq (T\cup \mathit{Col})\}|,
\]
and the score
\begin{equation}
\mathrm{score}(k,\Sigma)=2t+c+2v+w .
\label{eq:score}
\end{equation}
A skill \emph{qualifies} iff
\begin{equation}
\big(t\ge 1 \;\lor\; v\ge 1\big)\;\land\;\mathrm{score}(k,\Sigma)\ge
\textsf{min\_score}(k).
\label{eq:qualify}
\end{equation}
The table-or-value requirement in \cref{eq:qualify} is what prevents a lone
coincidental column (a stray \code{vendor} column on an unrelated finance schema)
from auto-triggering an expert; the \emph{value} disjunct is what lets a
distinctive cell string (a Tally accounting group such as ``Sundry Debtors'')
stand in for a table-name hit on schemas whose \emph{names} are opaque. In the live
path $\sigma$ is evaluated with $\mathit{Val}=\varnothing$ (the chat selector scores
over names only); the \emph{value} disjunct is exercised by the offline detector over
sampled cells (\cref{sec:explore}) and folded back into the live assignment by the
onboarding chain's persistence edge (\cref{sec:proactive}). The active
skill is
\begin{equation}
\sigma(c,d)=
\begin{cases}
\mathrm{assign}(c,d) & \text{explicit assignment exists (}\bot\text{ if disabled)}\\[2pt]
\displaystyle\argmin_{k\ \text{qualifying}}\big(-\mathrm{score}(k,\Sigma),\,
-\textsf{min\_score}(k),\,\mathit{slug}(k)\big) & \text{else, if any qualifies}\\[6pt]
\bot & \text{otherwise,}
\end{cases}
\label{eq:select}
\end{equation}
where the $\argmin$ is over the lexicographic order shown: highest score, then the
skill that cleared a stricter bar (higher \textsf{min\_score}, the stronger claim
relative to its own threshold), then the alphabetically-first slug, making selection
deterministic and independent of registry order. $\sigma$ is gated by a global
flag, memoized with a short positive and a deliberately shorter negative cache
window, and yields $\bot$ on \emph{any} error; it is fail-open by
construction. The resolved slug is written once into $x.\textsf{active\_skill}$ in
stage $s_1$ and carried as a lightweight string thereafter.

\subsection{The injection operator and its no-op property}
For a stage $s_i$ with active skill $k=\sigma(c,d)$, define the injected prompt
\begin{equation}
\mathrm{inj}_i(\mathit{prompt},x)=\mathit{prompt}\;\oplus\;\mathrm{render}_i(k,x),
\label{eq:inject}
\end{equation}
where $\oplus$ splices facet/reference text at the stage's anchor (for the coder,
\emph{before} any custom-prompt override marker) and $\mathrm{render}_i(\bot,x)
=\varepsilon$. As an example, the coder render composes the methodology and chart
facets, a keyword-selected reference, and a shared reasoning-discipline block:
\[
\mathrm{render}_{\textsf{coder}}=
H\oplus F(\textsf{coder})\oplus F(\textsf{chart})\oplus
\mathrm{refsel}(k,q)\oplus \mathrm{DISC},
\]
with $\mathrm{refsel}(k,q)=\argmax_{\textit{stem}}|\{w\in\kappa(\textit{stem}):
w\sqsubseteq q\}|$ over the (enhanced) question $q$, truncated to a chat budget;
the offline report path uses the \emph{same} routing $\kappa$ but admits multiple
\emph{full} references under a larger budget.

\begin{proposition}[No-op additivity / backward compatibility]\label{prop:noop}
Assume each stage's dependence on the active skill is confined to $\mathrm{inj}_i$
(the slug enters $s_i$ only through the injected prompt). If $\sigma(c,d)=\bot$ then
for every stage $i$ and every $\mathit{prompt}$,
$\mathrm{inj}_i(\mathit{prompt},x)=\mathit{prompt}$, and the skill-augmented
pipeline equals the baseline pipeline on every executed sub-sequence of $P$.
\end{proposition}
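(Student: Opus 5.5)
The plan is a two-level argument: first a pointwise identity for the injection operator, then an induction over whichever stages actually run.

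\emph{Pointwise identity.} Fix a stage $i$, a prompt, and a context $x$ with $\sigma(c,d)=\bot$. By \cref{eq:inject}, $\mathrm{inj}_i(\mathit{prompt},x)=\mathit{prompt}\oplus\mathrm{render}_i(\bot,x)$. By definition $\mathrm{render}_i(\bot,x)=\varepsilon$. So it suffices to show $\mathit{prompt}\oplus\varepsilon=\mathit{prompt}$ for every anchor convention. The splice $\oplus$ inserts text at an anchor, which for the coder is before the custom-prompt override marker. Inserting the empty string at any position of a string returns that string. This holds even if the anchor is absent or occurs several times, as long as $\oplus$ inserts nothing beyond its right operand. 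I would record this as an explicit side condition: $\oplus$ adds no separator, header or whitespace when its right operand is $\varepsilon$. Composite renders such as $\mathrm{render}_{\textsf{coder}}$ need care. The header $H$ and $\mathrm{DISC}$ are nonempty constants, but they are part of $\mathrm{render}_{\textsf{coder}}(k,x)$ only for $k\neq\bot$. The definition $\mathrm{render}_i(\bot,\cdot)=\varepsilon$ short-circuits the whole composition, including $\mathrm{refsel}$ and the budget truncation, so these never appear.

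\emph{Stage equality.} Write $s_i^{\mathrm{sk}}$ for the skill-augmented stage and $s_i^{0}$ for the baseline. By the hypothesis that the slug enters $s_i$ only through $\mathrm{inj}_i$, we can write $s_i^{\mathrm{sk}}(x)=G_i(x,\mathrm{inj}_i(\mathit{prompt}_i(x),x))$ and $s_i^{0}(x)=G_i(x,\mathit{prompt}_i(x))$ with the same $G_i$. The pointwise identity then gives $s_i^{\mathrm{sk}}=s_i^{0}$ as functions on the set of contexts with $\sigma=\bot$. One point needs attention: $s_1$ writes $x.\textsf{active\_skill}$. For $\bot$ it writes a null value, and I would argue that this field is read only via the injection path. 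That makes the write invisible to downstream $G_j$, or lets us take it to be part of the baseline context as well.

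\emph{Induction over executed stages.} Let $Q=(s_{j_1},\dots,s_{j_m})$ be the executed sub-sequence of $P$. This is determined by the route chosen in \textsf{resolve}, which is itself computed by a stage output. Induct on $\ell$ with the claim that after $\ell$ stages the augmented and baseline contexts coincide. The base case is the common input. For the step, equal contexts give equal route decisions, so the same next stage is selected. Applying $s_{j_\ell}^{\mathrm{sk}}=s_{j_\ell}^{0}$ to equal inputs gives equal outputs. This also shows that the two pipelines execute the same sub-sequence, which is needed to make ``on every executed sub-sequence'' meaningful.

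\emph{Main obstacle.} The hard step is making the confinement hypothesis and the $\oplus$-with-$\varepsilon$ convention precise enough to be usable. The proposition is essentially definitional once those are fixed, so the honest work is stating that $\sigma$ is computed once and fail-open. Memoization and cache windows do not matter here, because only the value $\bot$ is used. Beyond that, we need that no stage branches on the skill except through the prompt. I would state this as the factorisation through $G_i$ and note that it is an architectural assumption, not something derived from the formal model.
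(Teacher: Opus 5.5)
Your proposal is correct and follows the same approach as the paper: $\mathrm{render}_i(\bot,x)=\varepsilon$ and $p\oplus\varepsilon=p$ leave every prompt unchanged, the confinement (factoring) hypothesis makes each stage unchanged as a function, and composing unchanged stages along the executed route gives the pipeline-level equality. Your additions make explicit what the paper's three-line proof leaves implicit, namely the side condition that $\oplus$ inserts nothing when its right operand is $\varepsilon$, the induction showing both pipelines select the same route, and the handling of the $x.\textsf{active\_skill}$ write in $s_1$; they do not change the route of the argument.
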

\begin{proof}
$\mathrm{render}_i(\bot,x)=\varepsilon$ for all $i$ by definition and
$p\oplus\varepsilon=p$, so each stage receives an unchanged prompt; by the factoring
hypothesis the slug affects $s_i$ only through that prompt, so each $s_i$ is
unchanged as a function. Composing unchanged stages along any executed route leaves
the pipeline unchanged.
\end{proof}
\Cref{prop:noop} is the property that makes the entire subsystem safe to ship:
with no skill active, or any failure in resolution, the product is byte-identical
to its skill-free baseline.

\subsection{Offline knowledge compilation}
Let $B$ be a read-only DuckDB-on-parquet backend over $\Sigma$. For each table
$\tau$, a probe agent $A$ runs a self-terminating loop that emits actions from a
tool set $\textsf{Tools}$ comprising \textsf{get\_schema}, \textsf{profile\_columns},
\textsf{sample\_values}, \textsf{run\_sql}, \textsf{record\_finding}, and
\textsf{submit};
a deterministic coverage critic $\gamma_{\mathrm{cov}}$ maps a submission to
$\{\textsf{ok},\textsf{gaps}\}$, and an \emph{optional} LLM depth critic
$\gamma_{\mathrm{dep}}$ (fail-open, returning \textsf{deep} on any error) may
demand a deeper pass; $\mathrm{converged}(\tau)$ holds when $\gamma_{\mathrm{cov}}$
is satisfied and, where enabled, $\gamma_{\mathrm{dep}}$ is too. The compiler iterates
over the set of unconverged tables:
\begin{equation}
W_0=T,\qquad
W_{r+1}=\{\tau\in W_r : \lnot\,\mathrm{converged}(\mathrm{probe}(\tau,
\mathit{feedback}_r(\tau)))\},\quad r=1,\dots,R,
\label{eq:fixpoint}
\end{equation}
with a round cap $R$ (and a wall-clock backstop), where a \emph{keep-best} policy
guarantees a table never regresses once converged.

\begin{proposition}[Termination]\label{prop:term}
The sequence $(W_r)$ is monotone decreasing ($W_{r+1}\subseteq W_r$) and the loop
terminates in at most $R$ rounds. The compiled depth is \msc{deep} iff $W_R=
\emptyset$, else \msc{partial}.
\end{proposition}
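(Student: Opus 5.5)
The plan is to read monotonicity directly off the shape of \cref{eq:fixpoint} and then treat termination and the depth label as bookkeeping. The set-builder $W_{r+1}=\{\tau\in W_r:\lnot\,\mathrm{converged}(\cdots)\}$ is a filter over $W_r$, so every element of $W_{r+1}$ is by construction an element of $W_r$. That gives $W_{r+1}\subseteq W_r$ at once, with no appeal to the behaviour of the probe agent $A$ or either critic. I would state this first, since it holds regardless of whether probes are deterministic, whether $\gamma_{\mathrm{dep}}$ is enabled, or what $\mathit{feedback}_r$ contains.

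The one semantic point, and the step I expect to need the most care, is that a table filtered out at round $r$ never re-enters. Syntactically this is already guaranteed, because $W_{r+1}$ is built only from $W_r$ and never from $T$. The paper, however, also asserts that keep-best prevents regression. I would use keep-best to justify the operational reading of the recurrence. Once $\mathrm{converged}(\tau)$ holds, the stored result for $\tau$ is frozen at its best (converged) version, so no later round's output can overwrite it with a non-converged one. The implementation therefore never needs to re-add $\tau$ to the work list, and the abstract filter faithfully describes the loop. Because $\gamma_{\mathrm{dep}}$ fails open (it returns \textsf{deep} on error), a critic failure can only remove tables from $W$, never keep them in. This is consistent with monotone shrinkage.

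For termination, the loop executes one round per index $r=1,\dots,R$ and each round invokes finitely many probes, one per $\tau\in W_r$ with $|W_r|\le|T|$. Each probe is itself self-terminating, and the wall-clock backstop covers the case where an individual agent run stalls. So the compiler halts after at most $R$ rounds. I would add that it may stop earlier if $W_r=\emptyset$, since every later set is then empty by monotonicity. The bound $|W_r|\le|T|-(\text{number of tables converged so far})$ is a strictly shrinking work list, though the cap $R$, not strict decrease, is what bounds the round count.

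Finally, the depth classification follows from the definitions. By monotonicity, $W_R=\emptyset$ holds exactly when every $\tau\in T$ was removed at some round, i.e.\ converged, and this is the definition of \msc{deep}. Otherwise some table remains unconverged at the cap, which is labelled \msc{partial}. I would note the boundary explicitly: the proposition certifies termination and correct labelling, not that \msc{deep} is ever reached within $R$.
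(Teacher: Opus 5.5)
Your proposal is correct and follows the paper's proof essentially step for step: monotonicity is read off the filter form of \cref{eq:fixpoint}, keep-best rules out re-entry of converged tables, termination comes from the round cap $R$, and the \msc{deep}/\msc{partial} label is read off $W_R$. Your extra remarks on per-probe self-termination, the wall-clock backstop, and the fail-open $\gamma_{\mathrm{dep}}$ add care the paper omits, but $|W_r|$ is only non-increasing, not ``strictly shrinking,'' as your own caveat that the cap $R$ is what bounds the rounds already concedes.
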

\begin{proof}
By \cref{eq:fixpoint}, $W_{r+1}$ is a subset of $W_r$ by construction; under
keep-best, a table that converged in round $r$ is excluded from all later $W_{r'}$,
$r'>r$, so no element re-enters. A non-increasing sequence of finite sets bounded
below by $\emptyset$, evaluated for at most $R$ rounds, terminates. The depth label
reads off $W_R$.
\end{proof}

\paragraph{Join validation.} For a candidate join $\tau_a.x\to\tau_b.y$ proposed
by any of four signals (same name, shared skill-entity binding, data-driven
identifier discovery, or an LLM proposal told that names may differ), accept iff
\begin{equation}
\mathrm{cover}(x\!\to\!y)=\frac{|\mathrm{distinct}(x)\cap\mathrm{distinct}(y)|}
{|\mathrm{distinct}(x)|}\ge 0.5,
\label{eq:overlap}
\end{equation}
the column types are compatible, and $\tau_b.y$ is either data-measured
\textsf{unique} or a single-column primary key (a member of a \emph{composite} key is
rejected, being non-unique alone). Coverage is directional, the fraction of
child-key values with a referent in the candidate parent, so the $0.5$ floor
tolerates the partial referential integrity common in raw enterprise extracts, while
the uniqueness requirement on $\tau_b.y$ fixes the join's direction. Value overlap
establishes the \emph{precision} of accepted joins; recall is bounded by the four
candidate generators, and we do not claim a complete relationship graph. Each
accepted relationship carries its coverage, inferred cardinality, a confidence, and
an \evsql{} witness. The output
$\mathcal{K}(c,d)=(\text{per-table knowledge},\text{validated joins},
\text{glossary})$ is persisted durably; a compact brief $\mathrm{brief}(\mathcal{K})$
is shared by the report planner and the question generator.

\subsection{Admissibility: the trust contract}
A report view $v$ defines metrics $\{(\ell,\mathrm{ref}=(\textit{step},
\textit{col},\textit{agg}),\textsc{evidence\_sql})\}$. Let
$\mathit{derived}=\textit{agg}(\textit{step}.\textit{col})$ be the value the view
computed and $\mathit{ev}=\mathrm{scalar}(\mathrm{run\_sql}(\textsc{evidence\_sql}))$
the value obtained by independently re-executing the evidence query.
\begin{definition}[Admissible view]\label{def:adm}
$v$ is \emph{admissible} iff for every metric
\begin{equation}
\frac{|\mathit{derived}-\mathit{ev}|}{\max(|\mathit{derived}|,|\mathit{ev}|,\epsilon)}
\le 0.01,
\label{eq:adm}
\end{equation}
and every chart/table reference resolves to a defined step and a real column.
\end{definition}
A tolerance is needed because $\mathit{derived}$ and $\mathit{ev}$ follow
\emph{different} execution paths (recipe-step aggregation versus a standalone scalar
query) and can differ by float accumulation and rounding; the $1\%$ relative bound
absorbs this while still catching the order-of-magnitude and wrong-column errors that
matter.\footnote{The $1\%$ bound is a configurable default and a small positive
$\epsilon$ floors the denominator against division by zero; the guarantee boundary
is thus a parameter, not a universal constant.} A non-admissible view is
\emph{suppressed}; a report whose views are all
non-admissible is \emph{deferred} (hidden). The published artifact is therefore a
\emph{durable recipe} $r(v)$ (a SQL-step DAG, metric definitions, deterministic
chart specifications, and filters), and refresh is the \emph{zero-LLM, zero-kernel}
evaluation $\mathrm{exec}(r,\mathit{params})$; re-narration fires only when
$\mathrm{fingerprint}(\mathrm{exec}(r,\varnothing))$ changes.

\paragraph{The proactive loop.} Onboarding induces the chain
\[
\begin{aligned}
&\textsf{upload}\to \textsf{fast}(\Sigma)\to
\mathcal{K}=\mathrm{compile}(\Sigma,\sigma)\to \textsf{persist}\,\sigma\\
&\quad\to \textsf{Reports}=\{v\ \text{admissible}: v\in Y(\sigma)\ \text{over}\ \mathcal{K}\}\\
&\quad\to \textsf{Questions}=\mathrm{gen}(\sigma,\mathrm{brief}(\mathcal{K}),\textsf{Reports}),
\end{aligned}
\]
and a user click on a published number $n$ in view $v$ maps to a natural-language
query $q(n,v)$ that re-enters $P$, grounded in the same $\mathcal{K}$ (whose
validated joins are authoritative in \textsf{collate}). The system is proactive in
that it emits $(\textsf{Reports},\textsf{Questions})$ \emph{before} any $q$, and
reactive deep dives are $q\mapsto P(q\given \mathcal{K},\sigma)$.

\section{The Skill Abstraction}
\label{sec:skill}

A skill is the unit that makes domain expertise portable. It is a folder (not a
prompt, not a fine-tune, not a database row), so it is version-controlled,
reviewable, and shippable like code, yet it influences every reasoning step in the
system. \Cref{fig:skill} shows its anatomy.

\begin{figure}[t]
\centering
\begin{tikzpicture}[node distance=3mm]
\node[skillb, anchor=north west, text width=49mm] (root) at (0,0)
  {\textbf{\code{skills/ca\_expert/}}};
\node[proc, below=of root.south west, anchor=north west, text width=49mm] (md)
  {\code{SKILL.md}: manifest\\\scriptsize match, \code{min\_score}=5, \code{force\_ds}, reference fields, persona};
\node[proc, below=2mm of md, text width=49mm] (facets)
  {\code{facets/}: per-stage prompt fragments\\\scriptsize router · scout · coder · chart · narrator · report\_planner · report\_analyst};
\node[proc, below=2mm of facets, text width=49mm] (refs)
  {\code{references/}: deep knowledge\\\scriptsize gst · direct\_tax · statements\&ratios · tally\_data\_model · playbook · audit};
\node[proc, below=2mm of refs, text width=49mm] (yaml)
  {\code{reports.yaml}: 7 standing report templates};
\node[proc, below=2mm of yaml, text width=49mm] (lib)
  {\code{lib/accounting.py}: deterministic compute (opt-in)};
\node[stage, right=22mm of facets.north east, anchor=north west, text width=33mm] (cons1) {router · scout};
\node[stage, below=2mm of cons1, text width=33mm] (cons2) {coder · chart};
\node[stage, below=2mm of cons2, text width=33mm] (cons3) {narrator};
\node[stage, below=2mm of cons3, text width=33mm] (cons4) {deep explorer (skill lens)};
\node[stage, below=2mm of cons4, text width=33mm] (cons5) {report planner · analyst};
\node[stage, below=2mm of cons5, text width=33mm] (cons6) {question generator};
\draw[flow] (facets.east) to[out=0,in=180] (cons1.west);
\draw[flow] (facets.east) to[out=0,in=180] (cons2.west);
\draw[flow] (facets.east) to[out=0,in=180] (cons3.west);
\draw[flow] (refs.east) to[out=0,in=180] (cons4.west);
\draw[flow] (yaml.east) to[out=0,in=180] (cons5.west);
\draw[flow] (refs.east) to[out=0,in=180] (cons6.west);
\node[lbl, below=6mm of cons6] (note) {\parbox{38mm}{\centering chat injects \emph{one} short reference excerpt; reports inject \emph{multiple} full references; a shared reasoning-discipline block attaches to every reasoning lens}};
\end{tikzpicture}
\caption{Anatomy of a domain-expert skill pack (the chartered-accountant skill) and
where each part is injected. The same folder feeds the live pipeline, the offline
explorer, both report engines, and the question generator.}
\label{fig:skill}
\end{figure}

\subsection{Anatomy of a skill pack}
A pack contains a \code{SKILL.md} manifest (YAML frontmatter plus a body), a
\code{facets/} directory of per-stage prompt fragments, a \code{references/}
directory of deep-knowledge documents, a \code{reports.yaml} of standing-report
templates, and an optional \code{lib/} of deterministic Python. The manifest
declares only \emph{lightweight routing configuration}: the auto-detect patterns,
a keyword-to-reference map, which references are ``core'' (always loaded for
reports), which reference supplies the KPI vocabulary and which the review cadence,
a persona example for narration, and a flag that routes the coder to the
data-science agent so the skill's modeling methods are available. The heavy text
(facets, references) is loaded into an in-memory object; the run context carries
only the slug. We ship three packs (a Chartered Accountant expert, a Chief
Marketing Officer expert, and a Supply-Chain expert) to demonstrate that the
framework is not hardcoded to one domain; the structure, not the result, is what
the second and third packs evidence.

\begin{lstlisting}[language=,caption={The frontmatter of the chartered-accountant skill (excerpt). Routing fields and \texttt{match} patterns are declarative; the value tokens enable detection on opaque schemas.},label={lst:skillmd}]
name: ca_expert
force_ds_agent: true
persona_example: business owner or CFO
kpi_reference: financial_statements_and_ratios
playbook_reference: ca_practice_playbook
schema_mapping_reference: tally_data_model         # deep-explorer recognition lens
reference_keywords:
  indian_gst: [gst, gstr, itc, cgst, sgst, igst, hsn, "place of supply", "3b"]
  indian_direct_tax: [tds, tcs, "income tax", "advance tax", "44ab", "194", ...]
  financial_statements_and_ratios: [ratio, "balance sheet", roce, "schedule iii", ...]
match:
  min_score: 5
  any_table: [ledger, voucher, daybook, trial_balance, gstr, "_tds_", tally, ...]
  any_column: [ledger, voucher_type, voucher_no, debit, credit, ...]
  any_value: ["sundry debtors", "sundry creditors", "duties & taxes", ...]  # opaque-schema fallback
\end{lstlisting}

\subsection{Registry and selection}
The registry is a lazy, thread-locked process singleton built once by scanning the
\code{skills/} directory; it has no database dependency, so it is safe to import
from any node, including the headless report jobs, and multi-worker deployments
each build an identical registry from the same files. A malformed pack is logged
and skipped rather than breaking the rest.

Selection follows \cref{eq:select}. An explicit assignment (an admin override, or a
previously cached auto-detection) wins first; otherwise the system loads table and
column names, preferring a local parquet schema cache and falling back to a unified
cross-connector schema config, so detection is source-agnostic across uploaded
files, SAP, Snowflake, Postgres, and others, and runs the scorer of
\cref{eq:score,eq:qualify}. On a confident hit the result is persisted back as an
\code{auto\_detected} assignment so subsequent lookups are a single fast read and an
operator can see and override the choice. \Cref{alg:select} states the resolution.

\begin{algorithm}[t]
\DontPrintSemicolon
\caption{Deterministic, fail-open skill selection $\sigma(c,d)$}
\label{alg:select}
\KwIn{client $c$, dataset $d$; registry $K$}
\KwOut{active slug or $\bot$}
\If{\textnormal{not} \code{should\_use\_skills}$(c)$}{\Return $\bot$}
\If{cache hit for $(c,d)$}{\Return cached value}
\If{explicit assignment $A$ exists}{
  \lIf{$A$ disabled}{\Return $\bot$}
  \lIf{$A.\mathit{slug}$ valid}{\Return $A.\mathit{slug}$}
}
$(T,\mathit{Col})\leftarrow$ schema names (parquet cache, else connector config)\;
$m\leftarrow \code{match\_schema}(T,\mathit{Col},\mathit{Val})$ \tcp*{\cref{eq:score,eq:qualify,eq:select}}
\If{$m\neq\bot$}{persist $m$ as \code{auto\_detected}; \Return $m.\mathit{slug}$}
\Return $\bot$ \tcp*{any exception in the above also returns $\bot$}
\end{algorithm}

\subsection{Cross-cutting injection}
\Cref{eq:inject} is realized by one small \code{render\_*} helper per node, each a
no-op when no skill is active (\cref{prop:noop}). The router receives a domain lens
for question rewriting and table routing; each per-table scout prompt is annotated
with the columns the expert cares about; the coder system prompt is prefixed, ahead
of any client custom prompt, with a methodology block that bundles the coder and
chart facets, plus a single keyword-selected reference excerpt; and the narrator is
framed to brief the skill's persona. A subtlety that matters in practice: the
\emph{chat} path splices exactly one reference excerpt under a tight budget (latency
is bounded), whereas the offline report path uses the \emph{same} keyword routing
but loads multiple \emph{full} references under a far larger budget, because reports
are correctness-first batch jobs that can afford the context. Finally, a shared
\emph{reasoning-discipline} block, demanding explicit confidence levels, named
blind spots, confounders-before-conclusions, and a falsification test for each
headline claim, is inherited by every skill at every reasoning lens, attaching an
explicit confidence/blind-spot/falsification discipline to each piece of generated
reasoning. An optional,
off-by-default hook can additionally inject a skill's deterministic Python into the
kernel bootstrap, wrapped so that a broken library is non-fatal and the live
product is unchanged until an operator opts in.

\subsection{Toward a marketplace of expert skills}
\label{sec:marketplace}
The abstraction is deliberately open-ended, and three properties make the catalogue
\emph{extensible by addition} rather than by engineering. (i)~A skill is a
self-contained folder with no database dependency, so authoring a new expert is a
content task (writing the manifest, facets, references, and report templates), not
a code change; the registry discovers it on a directory scan. (ii)~Selection is a
deterministic $N$-way score with a stable tiebreak (\cref{eq:select}), so adding the
tenth or hundredth skill does not perturb the resolution of the first; the catalogue
scales without a combinatorial routing problem. (iii)~The no-op property
(\cref{prop:noop}) guarantees that an unmatched or malformed pack cannot degrade the
base product, so packs can be published, versioned, and retired independently and
safely. Together these turn the framework into a substrate for a \emph{marketplace of
domain experts}: each pack encodes the playbook of a senior practitioner (a
PhD-level or veteran industry specialist) and the three we ship (a Chartered
Accountant, a Chief Marketing Officer, and a Supply-Chain expert) are seeds of a
catalogue that could span equity research, actuarial and clinical analysis, FP\&A,
pricing and revenue management, risk and audit, and beyond. A tenant connects data,
the relevant expert is selected automatically, and the same machinery that serves
one expert then serves an arbitrary library of them. We make no claim to have populated
such a marketplace; the present catalogue is small and human-authored
(\cref{sec:limits}). We claim only that the architecture is built to host one.

\section{Persistent Agentic Schema Exploration}
\label{sec:explore}

Before the system can apply domain expertise, it must build structural knowledge of
the data, profiling each column, testing encoding hypotheses, and validating joins.
A fast one-shot pass writes a one-line description per table at onboarding; the
knowledge-compilation phase is a second, slower pass that actually digs in. Its design is borrowed directly from how a coding
agent explores a codebase: give a model a small set of tools, let it choose actions,
feed back observations, and let an \emph{independent} check decide when it is done
\citep{yao2023react,shinn2023reflexion}.

\begin{figure}[t]
\centering
\begin{tikzpicture}[node distance=6mm]
\node[proc, text width=30mm] (probe) {probe agent\\\scriptsize JSON action over DuckDB};
\node[proc, right=14mm of probe, text width=24mm] (submit) {\code{submit}\\\scriptsize terminal action};
\node[dec, right=12mm of submit, text width=15mm] (cov) {coverage\\critic\\(code)};
\node[proc, right=12mm of cov, text width=22mm] (assemble) {assemble +\\\scriptsize upsert to store};
\draw[flow] (probe) -- node[lbl,above]{tool result} (submit);
\draw[flow] (submit) -- (cov);
\draw[flow] (cov) -- node[lbl,above]{ok} (assemble);
\draw[flow] (cov.south) to[out=-90,in=-90] node[lbl,below]{gaps: ``you missed column X''} (probe.south);
\node[lbl, above=8mm of submit] (guards) {\parbox{52mm}{\centering \textbf{anti-runaway guards}: doom-loop ($3\times$ same action) · diminishing-returns nudge · step ceiling · history trim}};
\draw[dflow] (guards.south) -- (submit.north);
\node[proc, below=14mm of probe.south west, anchor=north west, text width=22mm] (w1) {$W_1$: $|T|$ tables};
\node[proc, right=6mm of w1, text width=20mm] (w2) {$W_2\subseteq W_1$};
\node[proc, right=6mm of w2, text width=20mm] (w3) {$W_3\subseteq W_2$};
\node[stage, right=8mm of w3, text width=30mm] (depth) {\msc{deep} iff $W_R=\emptyset$\\else \msc{partial}};
\draw[flow] (w1) -- node[lbl,above]{round} (w2);
\draw[flow] (w2) -- node[lbl,above]{round} (w3);
\draw[flow] (w3) -- (depth);
\node[lbl, below=1mm of w2] {self-healing convergence (\cref{prop:term}), $R\le 3$};
\node[proc, below=10mm of w1.south west, anchor=north west, text width=43mm] (sig) {4 candidate signals\\\scriptsize same-name · skill-entity · data-driven id · LLM (``names may differ'')};
\node[dec, right=10mm of sig, text width=24mm] (gate) {overlap $\ge 0.5$\\type-compat.\\parent unique};
\node[stage, right=10mm of gate, text width=33mm] (rel) {Relationship\\\scriptsize \{overlap, cardinality, confidence, evidence\_sql\}};
\draw[flow] (sig) -- node[lbl,above]{DuckDB value-overlap} (gate);
\draw[flow] (gate) -- node[lbl,above]{accept} (rel);
\end{tikzpicture}
\caption{Knowledge compilation. Top: the per-table probe loop, where ``continue'' is the
default and ``stop'' is an explicit terminal action gated by a \emph{deterministic}
coverage critic, bounded by anti-runaway guards. Middle: the self-healing
convergence loop, whose pending set provably shrinks. Bottom: join discovery, where
every candidate is \emph{proven} on the data by value overlap.}
\label{fig:compile}
\end{figure}

\subsection{The per-table probe loop}
One agent runs per table (fanned out with bounded concurrency). It is handed the
table's columns and types, the tool contract, and, if a skill matched, an expert
``lens'' describing what matters in this domain. It replies with a single action in
JSON (the providers do not reliably support native tool calling, so actions are
dispatched from JSON, the same mechanism the live coder uses); the system executes
that action against DuckDB over the parquet files (\emph{zero} load on the client's
production database) and returns the result as the next observation. The agent
profiles columns, samples values, tests encoding hypotheses (``is this period field
really \code{YYYYMM} text?''), and checks keys, until it calls the terminal
\code{submit} action with a full characterization.

\paragraph{Stop is gated, not self-declared.} Crucially, the decision to stop is
not the model's to make alone. A \emph{deterministic} coverage critic (plain code,
not an LLM, so it is fast, free, and never hallucinates) checks whether every
column was profiled and typed, whether a primary key and grain were identified, and
whether the skill's expected entities were found. If something is missing it returns
a specific gap (``you missed column $X$'') and the loop continues; only when the
coverage critic is satisfied is the table assembled and upserted. An \emph{optional}
LLM ``depth'' critic may then require a second, deeper pass; it is fail-open
(treating any error as satisfied), so it can raise quality but never blocks
termination (\cref{prop:term}). This mirrors the ``verification gate before done''
pattern and directly addresses the finding that intrinsic self-assessment is
unreliable \citep{huang2023cannotselfcorrect}: the in-loop gate is external and
deterministic. Anti-runaway guards (breaking on the same action
repeated three times, a diminishing-returns nudge, a per-table step ceiling, and
trimming of old observations so context stays small) bound the loop, and a
keep-best policy ensures the self-healing re-attempts of \cref{eq:fixpoint} never
regress a converged table. The honest depth label follows \cref{prop:term}: a run is
\msc{deep} only if \emph{every} table converged, else \msc{partial}.

\subsection{Joins proven on the data}
Cross-table joins are where naive schema reasoning fails, because matching columns
frequently have \emph{different names} across tables. The explorer generates
candidates from four generic signals (identical names, a shared skill-entity
binding, data-driven discovery of identifier-like columns, and an LLM proposal
explicitly told that names may differ and to match by meaning and overlapping
sample values) and then \emph{proves} each one on the data with a DuckDB
value-overlap query (\cref{eq:overlap}), keeping only joins whose values actually
overlap, whose types are compatible, and whose parent side has \emph{measured}
uniqueness. The use of measured cardinality rather than a declared primary-key flag
is deliberate: a member of a composite key is not unique on its own and must not be
treated as a parent. Each surviving relationship carries its overlap percentage,
cardinality, a confidence, and an evidence query, and these validated joins are
later treated as authoritative by the live pipeline's \textsf{collate} stage.

\subsection{Persistence and economics}
Findings are written to a per-(client, dataset) document in the operational store
the moment each table is done (durable and resumable) and exported to a JSON file
for backup. Human curation (notes on a database, table, or column, and manually
edited relationships) is preserved across re-runs, and refreshes are capped (a small
fixed number) to bound cost; the live pipeline reads the knowledge through a short
in-process cache. The detected skill is persisted back to the assignment store so
the live chat selector (which keys off names) and the value-based detector used here
agree, an edge in the onboarding chain we revisit in \cref{sec:proactive}. The cost
of compilation scales with the number of tables and their width, \emph{not} with row
count, because the data probing is DuckDB work and only the agent's reasoning
consumes tokens; it is a one-time, amortized cost rather than a per-query one.

\section{Execution-Verified Expert Report Synthesis}
\label{sec:reports}

Standing expert reports are the proactive product's centerpiece: pre-computed,
domain-expert views a user reads \emph{first}. Because they are shown without a
prompting question, correctness matters more than latency, and the engine is
modeled on the explorer (a planned, critic-gated agent) rather than on the live
coder. \Cref{fig:report} shows the pipeline; \cref{lst:report} shows a template
view; the trust core is \cref{def:adm}.

\begin{figure}[t]
\centering
\begin{tikzpicture}[node distance=8mm]
\node[proc, text width=27mm] (plan) {Lead-analyst\\planner\\\scriptsize template = hypothesis + quality floor};
\node[proc, right=10mm of plan, text width=27mm] (sec) {Section analysts\\\scriptsize parallel; define SQL steps + KPIs + evidence\_sql};
\node[dec, right=11mm of sec, text width=20mm] (crit) {Critic:\\re-run\\evidence\_sql};
\node[stage, right=11mm of crit, text width=23mm] (pub) {assemble recipe\\$\to$ narrate\\$\to$ \textbf{publish}};
\node[proc, below=7mm of crit, text width=22mm, fill=warm] (sup) {\textbf{suppress} view};
\draw[flow] (plan) -- node[lbl,above]{$3$–$6$ cuts} (sec);
\draw[flow] (sec) -- (crit);
\draw[flow] (crit) -- node[lbl,above]{pass ($\le 1\%$)} (pub);
\draw[flow] (crit) -- node[lbl,right]{block} (sup);
\node[lbl, below=2mm of sup] {report with no admissible view $\Rightarrow$ \textbf{defer} (hidden)};
\node[store, below=12mm of pub, text width=30mm, fill=soft] (rec) {Durable recipe\\\scriptsize SQL DAG · KPI defs · chart specs · filters};
\draw[flow] (pub) -- (rec);
\node[proc, left=12mm of rec, text width=28mm, fill=soft] (drill) {refresh · filter ·\\chart-click drill-down};
\draw[flow] (rec) -- node[lbl,above]{zero-LLM / zero-kernel} (drill);
\end{tikzpicture}
\caption{Execution-verified report synthesis. Every metric carries an evidence
query that an independent, non-LLM critic re-executes; a view that fails
\cref{def:adm} is suppressed, and a report with no admissible view is deferred. The
published artifact is a durable recipe that refreshes and drills down at zero LLM
and zero kernel cost.}
\label{fig:report}
\end{figure}

\subsection{Plan, fan out, verify, narrate}
An adaptive lead-analyst planner treats a \code{reports.yaml} template as a
\emph{hypothesis and a quality floor}, not a script: it probes the data and expands
the report into a handful of verifiable cuts that the actual columns support. A
section analyst is then dispatched per view (bounded concurrency), each defining
durable SQL steps and emitting metrics with an evidence query attached. The
authoritative step is the critic: for each metric it \emph{independently}
re-executes the evidence SQL and tolerance-matches the result against the value the
view derived (\cref{eq:adm}); a value/evidence mismatch, a broken reference, or an
evidence-query error is a \emph{block} (a non-numeric result is a block for a metric
that declares a numeric unit). This is execution-grounded verification in the spirit
of CRITIC and self-debugging \citep{gou2023critic,chen2023selfdebug}, applied not to
fix code but to \emph{gate publication}: a number that does not reproduce against its
own evidence query is never shown.

\paragraph{The failure mode is silence, not error.} A view that fails verification
is suppressed; a report with no surviving view is deferred and simply not displayed.
The system never surfaces a raw error to a user, and, within the boundary stated
below, never publishes a number it could not reproduce. The honest claim is not
``zero hallucination'' or ``always correct''; it is \emph{suppress-or-defer}: an
unverifiable figure is withheld. And the boundary is precise: \cref{eq:adm}
certifies that a published number \emph{matches its declared evidence SQL}, not that
the metric's business definition is the semantically right one. Verification guards
arithmetic faithfulness, not definitional correctness.

\begin{lstlisting}[language=,caption={A standing-report template view (the GST summary, excerpt). The template is client-agnostic instructions; the engine maps it onto whatever books the client has, then verifies and saves a re-runnable recipe.},label={lst:report}]
- id: gst_summary_reconciliation
  category: GST
  applies_when: Dataset has GST fields (CGST/SGST/IGST, taxable value) ...
  kpi_hints: [output_tax, input_tax, net_gst_payable, itc_available]
  views:
    - id: overview
      prompt: >
        As a GST expert, summarise the GST position for the tax period. Map the
        schema to taxable value, CGST/SGST/IGST and sales-vs-purchase direction.
        Report output tax, input tax/ITC and NET GST payable, split by rate. Where
        a portal figure (GSTR-2B) is absent, DECLARE the blind spot rather than
        assume all input tax is creditable. ...
      sections: [kpi_strip, chart, table, insight]
\end{lstlisting}

\subsection{The durable recipe}
Publication produces a \emph{recipe}: a SQL-step DAG, metric definitions (each a
reference into a step plus an aggregation and its evidence query), deterministic
chart specifications, a drill-down table, and filters. The recipe, not an LLM, is
what refresh re-runs: refreshing, filtering, and chart-click drill-down all evaluate
the recipe with zero LLM and zero kernel cost, and because charts, KPI strip, and
table all read from the \emph{same} step rows, they cannot disagree. Re-narration is
gated by a data fingerprint, so the language model is invoked again only when the
underlying numbers actually change. The published document is shaped to match the
platform's existing report contract exactly, carrying the recipe, a ``how we
computed this'' provenance view that exposes the evidence SQL, and a per-view
confidence tier additively, so the engine is a drop-in behind a flag, fully
backward-compatible with the legacy generator.

\section{Proactive Question Generation and the Loop}
\label{sec:proactive}

The final step is to tell the user \emph{what to ask}. Suggested questions are
generated from two fused sources: the matched skill's report themes, KPI catalogue,
and review-cadence playbook (the expert's agenda), and the \emph{same}
validated-knowledge brief the report planner consumes (so the questions are
answerable against real entities, grains, and joins). The generator builds roughly
thirty diverse questions, then rewrites them into standalone analyst phrasing using
business vocabulary (never raw column names) and writes them to a cache. The prompt
targets the review-cadence questions a senior domain leader would ask to run the
business, scoped to the entities the schema exposes; we make no claim about how end
users perceive them. Everything is
fail-open: missing metadata, skill, or knowledge degrades to schema-only questions,
and finally to a fixed set of generic fallbacks, so the home page is never empty.

\begin{figure}[t]
\centering
\begin{tikzpicture}[node distance=10mm]
\node[stage, text width=24mm] (data) {Data connected /\\uploaded};
\node[stage, right=14mm of data, text width=30mm] (know) {Fast + Deep exploration\\\scriptsize validated knowledge $\mathcal{K}$, skill auto-persisted};
\node[stage, right=14mm of know, text width=26mm] (rep) {Standing expert reports\\\scriptsize verified KPIs};
\node[stage, below=10mm of rep, text width=26mm] (q) {Suggested questions\\\scriptsize mirror report KPIs};
\node[stage, left=14mm of q, text width=26mm] (read) {User reads numbers,\\clicks a figure};
\node[proc, left=14mm of read, text width=24mm, fill=warm] (live) {Live pipeline $P$\\\scriptsize verified deep dive};
\draw[flow] (data) -- node[lbl,above]{} (know);
\draw[flow] (know) -- node[lbl,above]{\textsc{e1} fast$\to$deep} (rep);
\draw[flow] (know.south) to[out=-90,in=90] node[lbl,right]{\textsc{e2} persist skill} ($(know.south)+(0,-7mm)$);
\draw[flow] (rep) -- node[lbl,right]{\textsc{e4} on success} (q);
\draw[flow] (q) -- (read);
\draw[flow] (read) -- node[lbl,above]{composed NL prompt} (live);
\draw[flow] (live.north) to[out=90,in=180] node[lbl,above]{$q\given \mathcal{K},\sigma$} (rep.west);
\node[lbl, below=2mm of know] {\textsc{e3} deep$\to$reports (only on convergence)};
\node[lbl, below=12mm of read] {\parbox{72mm}{\centering No prior user query is required: the loop runs to completion before the blank query box is ever used.}};
\end{tikzpicture}
\caption{The proactive-analyst loop and the fully-automatic onboarding chain.
Four trigger edges (\textsc{e1}--\textsc{e4}) wire onboarding end-to-end; a clicked
number is handed back to the live pipeline as an on-topic deep-dive query grounded
in the same compiled knowledge.}
\label{fig:proactive-loop}
\end{figure}

\subsection{The fully-automatic onboarding chain}
What makes this a \emph{loop} rather than a set of features is that the edges fire
automatically (\cref{fig:proactive-loop}). Connecting or uploading data triggers the
fast pass; \textsc{e1} the fast pass triggers deep exploration; \textsc{e2} deep
exploration persists the detected skill into the assignment store (bridging the
name-based chat selector and the value-based deep detector so both agree); \textsc{e3}
deep exploration triggers report generation \emph{only after} convergence; and
\textsc{e4} successful report generation triggers question regeneration so the
questions mirror the freshly verified report KPIs. Each edge is fire-and-forget and
fail-open, and autonomy is human-overridable throughout: an administrator can assign
or disable a skill, edit knowledge notes, and trigger a capped refresh.

\subsection{Closing the loop in the interface}
The user lands not on a blank box but on a board of finished analyses: an operational
brief, a hero summary, alert cards, each with a small visualization rendered from the
\emph{real} published figure. Reading a number raises the natural follow-up
``why?'', and there are two paths. The first is an in-page, zero-LLM drill: clicking
a chart re-evaluates the recipe at a finer grain. The second hands control back to
the live pipeline: a composed natural-language prompt is marshaled into the streaming
chat, where it is answered against the same validated knowledge, closing the loop
from \emph{reports} to \emph{numbers} to \emph{questions} to \emph{verified deep
dive}. Because the suggestions come from a domain-expert prior that exists on day one
and are generated against the auto-explored schema, they sidestep both failure modes
of prior art: they need no query logs \citep{chatzopoulou2009querie,wang2022nextstep}
and they are role-relevant rather than merely statistically salient
\citep{ding2019quickinsights}.

\section{System and Implementation}
\label{sec:system}

We describe the system at the level of its \emph{principles and contracts} (the
abstractions, the selection and admissibility rules, and the engineering posture
that make the architecture work) rather than as a turnkey blueprint; internal
prompt text, tuning constants, and module-level wiring are elided as
implementation detail. With that scope set, the live regime is a custom, durable,
multi-stage streaming pipeline (the stages of \cref{sec:formal}). A run is a
detached, durable task that writes
its events to a per-run append-only log; the HTTP response body is a replay-then-tail
reader over that log, which makes reconnection and resume natural and decouples the
long-running analysis from event delivery. The active skill is resolved exactly once,
in \textsf{guard}, in parallel with relevance guards, and threaded downstream as a
lightweight slug. Multi-tenancy is strict: every datastore query, storage path, and
job is scoped by a client identifier with no defaults, table-level access control is
enforced at resolution and execution, and model selection is per-client and per-tier
behind a rate governor.

A single \emph{substrate}, parquet over DuckDB, is shared by the explorer, the
report engine, and the live coder, so exploration imposes zero load on the client's
production database and the live path benefits from the same fast columnar access; an
analytics backend abstraction prefers materialized parquet and falls back to
read-only generated SQL across the supported relational connectors. Charts are rendered
deterministically backend-side from a typed specification rather than from free-hand
plotting code, which is what lets a report's recipe reproduce identical visuals
without an LLM.

The engineering posture is uniform: \emph{fail-open at every boundary} and
\emph{additive behind flags}. The skill framework, deep exploration, the deep report
engine, and the kernel-library hook are each gated, default to the prior behavior,
and degrade to a no-op on error (\cref{prop:noop}); document shapes are
backward-compatible; and the system is covered by a large test suite plus a
DuckDB-oracle correctness harness. The cost/latency trade-off is made explicitly by
path: the chat path uses a single small reference excerpt and is latency-bounded,
while exploration and report generation are quality-first batch jobs that can afford
full references and can wait out rate limits.

\section{Case Study: Single-Tenant Deployment}
\label{sec:casestudy}

We report illustrative observations from one live tenant onboarded with Tally
accounting data materialized to parquet. We emphasize that these are
\emph{single-deployment} figures, presented for transparency; we ran no controlled
study and report no benchmark numbers.

\begin{table}[H]
\centering
\caption{Single-tenant deployment evidence (illustrative, observed from one live
account). These counts evidence end-to-end execution and convergence on one schema;
they are \emph{not} precision/recall of join discovery or report correctness, for
which no ground truth was collected.}
\label{tab:deployment}
\small
\begin{tabularx}{\linewidth}{@{}lX@{}}
\toprule
\textbf{Aspect} & \textbf{Observed} \\
\midrule
Skill auto-detected & \code{ca\_expert} (Chartered Accountant), via \emph{value-based}
match on an opaque/Tally schema where names alone would not fire \\
Knowledge depth & \msc{deep} (every probed table converged; \cref{prop:term}) \\
Tables converged & $10\,/\,10$ \\
Validated joins & $22$, each carrying an evidence query (\cref{eq:overlap}) \\
Standing reports generated & $7$ chartered-accountant reports, with \emph{no} prior
chat query \\
Example evidence-reproduced figure & a GST report \emph{published} a net figure of
$\approx\!\text{\rupee}97.86$\,L, reproduced against its evidence SQL within the
system (\cref{def:adm}), a published, reproduced value, not an independently audited
financial fact \\
Suggested questions generated & $\approx\!30$ accounting questions, grounded in the
report KPIs and the validated-knowledge brief \\
Onboarding & fully automatic (four fire-and-forget trigger edges) \\
Refresh marginal cost & zero-LLM, zero-kernel via the durable recipe \\
Regression posture & the test suite passes with the skill, explorer, and report
subsystems active \\
\bottomrule
\end{tabularx}
\end{table}

Qualitatively, the loop ran end to end with no human in the path: the value-based
detector recognized accounting data from distinctive ledger-group strings even though
the table names were opaque, exploration converged on all ten tables and proved
twenty-two joins, seven reports were authored and admitted under the verification
gate, and roughly thirty domain questions were generated (phrased in business
vocabulary and grounded in the report KPIs and validated-knowledge brief) before the
tenant ran a single query. The published GST figure illustrates
the trust contract: it appears \emph{because} its evidence query reproduced it within
tolerance, and views whose numbers could not be reproduced would have been suppressed
rather than shown.

\section{Discussion: Novelty and Positioning}
\label{sec:discussion}

The defensibility of this architecture is structural, and it survives
counter-positioning along three axes (\cref{tab:positioning}).

\begin{table}[t]
\centering
\caption{Positioning along three axes: who supplies the question, what
``proactive'' means, and what guarantees the number.}
\label{tab:positioning}
\footnotesize
\renewcommand{\arraystretch}{1.18}
\begin{tabularx}{\linewidth}{@{}l>{\raggedright\arraybackslash}X>{\raggedright\arraybackslash}X>{\raggedright\arraybackslash}X@{}}
\toprule
\textbf{System / class} & \textbf{Question source} & \textbf{Proactivity} &
\textbf{Number guarantee} \\
\midrule
Text-to-SQL; question-first NLQ \citep{yu2018spider,li2023bird,aws2021quicksightq,google2025lookerca}
& user supplies the question & reactive & translation correctness only \\
Log/next-step recommenders \citep{chatzopoulou2009querie,wang2022nextstep}
& mined from query logs & reactive-assist; cold-starts on new data & none \\
Statistical insight miners \citep{ding2019quickinsights,demiralp2017foresight,vartak2015seedb,thoughtspot2025spotiq,microsoft2025quickinsights}
& automatic & proactive but \emph{statistical} & statistical interestingness \\
Metric-bound proactive \citep{tableau2025aboutpulse,tableau2025pulseinsighttypes}
& automatic/triggered & proactive but bound to \emph{curated metrics} & statistical, LLM-narrated \\
Agent skills / skill libraries \citep{anthropic2025agentskills,wang2023voyager}
& general tooling & modular, not per-tenant analytics & varies \\
\textbf{This work} & \textbf{system supplies, from a portable expert prior} &
\textbf{proactive over un-curated, skill-covered schema} & \textbf{KPI re-verified by
evidence-SQL re-execution} \\
\bottomrule
\end{tabularx}
\end{table}

\paragraph{Who supplies the question.} Text-to-SQL and question-first NLQ assume the
user has a question; log-based recommenders supply one but need prior logs and an
active session, so they cold-start on a fresh tenant. We supply the question from a
\emph{portable domain-expert prior} that exists on day one with no logs and no
per-client engineering, and that is guaranteed answerable because it is generated
from the same auto-explored, validated knowledge the pipeline answers against.

\paragraph{What ``proactive'' means.} Statistical insight miners surface what is
statistically salient, not what a chartered accountant or marketing leader would act
on; the most proactive metric-driven product, Tableau Pulse, is bound to an
analyst-curated metric layer \citep{tableau2025aboutpulse}. We are proactive over an
un-curated schema (for a domain a skill covers) with \emph{no} pre-curated metric
layer: the expert supplies the agenda from the skill's templates and the explorer
expands it into the cuts the actual columns support.

\paragraph{What guarantees the number.} Persona/expert injection is known to
introduce domain-correlated bias \citep{tseng2024persona}, and intrinsic
self-correction is unreliable \citep{huang2023cannotselfcorrect}. Our countermeasure
is to fuse the opinionated expert with \emph{execution-grounded verification}: every
published metric carries an evidence query that an independent, deterministic critic
re-executes and tolerance-matches, and every join is proven on the data. The expert
sets the agenda; it cannot fabricate the figure.

\paragraph{The honest differentiator.} It is none of the commoditized parts in isolation: not
NL-to-SQL, not anomaly detection, not skill packaging. It is the \emph{combination},
deployed and fail-open: a portable domain-expert layer threaded as a cross-cutting
concern through router, scout, coder, chart, narrator, the explorer, and both report
engines; schema-derived (not metric-bound) agentic analysis; and end-to-end execution
verification, instantiated as a closed proactive loop with a fully-automatic
onboarding chain. We do not claim to invent skill packs (modular agent skills predate
and parallel this work \citep{anthropic2025agentskills}); the novelty is the
per-tenant, schema-auto-selected analytics \emph{specialization} fused with
execution-grounded trust.

\paragraph{The trade-off we accept.} A curated metric layer
\citep{dbt2024semanticlayer,looker2025lookml} encodes \emph{definitional}
correctness (blessed business definitions) that our schema-derived agenda forgoes.
We trade per-organization curation cost for portability and recover \emph{arithmetic}
(not definitional) trust via evidence re-execution; bridging the two (ingesting an
existing metric layer as an additional skill input) is natural future work.

\section{Limitations and Threats to Validity}
\label{sec:limits}

\begin{itemize}
\item \textbf{Single deployment, no controlled study.} Evidence is from one tenant; we
report no user study and no public-benchmark numbers. The deployment figures are
illustrative, and the two undeployed packs evidence the generality of the
\emph{structure}, not of results.
\item \textbf{Heuristic, human-authored skills.} The catalogue is small and the packs
are written by hand; auto-detection is a heuristic that can miss or mis-route on a
truly generic schema, mitigated, but not eliminated, by explicit assignment and
fail-open-to-no-skill (\cref{prop:noop}).
\item \textbf{Verification boundary.} \Cref{eq:adm} certifies that a published number
matches its declared evidence query within tolerance; it does \emph{not} certify that
the metric's business definition is semantically correct. The guarantee is arithmetic
faithfulness, not definitional correctness, and its failure mode is suppression, not a
certificate of universal correctness.
\item \textbf{Persona bias.} An opinionated expert can bias framing
\citep{tseng2024persona}; deterministic grounding (evidence re-execution, typed
formatting, deterministic rendering) constrains the \emph{numbers} but not necessarily
the emphasis.
\item \textbf{Cost and staleness.} Exploration and report generation are token-heavy
offline jobs whose cost scales with tables and width; it is amortized by zero-cost
refresh, not free. Compiled knowledge can go stale; it is regenerated on
schema-fingerprint change and via capped manual refresh, but is not continuously
revalidated.
\end{itemize}

\section{Conclusion}
\label{sec:conclusion}

A pluggable domain-expert skill abstraction and an offline knowledge-compilation loop
together flip enterprise data analysis from reactive question-answering to a
proactive, verified analyst: the system understands the data, authors the agenda,
re-verifies each published number against its evidence SQL, and tells the user what to
ask, before the query box is ever
used. The architecture's defensibility is not any one commoditized capability but
their fusion: domain expertise as a cross-cutting, auto-selected, fail-open concern,
joined to execution-grounded trust and closed into a proactive loop with an automatic
onboarding chain. Because expertise lives in self-contained, deterministically
selected, fail-open packs, the natural trajectory is a \emph{marketplace of domain
experts} (each the encoded playbook of a PhD-level or veteran industry
specialist) that a tenant draws on automatically the moment data is connected.
Future work is that larger and partly auto-authored catalogue, \emph{semantic} (not
only numeric) metric verification, continuous knowledge revalidation, and the
rigorous user study this single-deployment paper deliberately does not claim.

\small
\bibliographystyle{plainnat}
\bibliography{references}

\end{document}